%% file: main.tex
\documentclass{article} 
\usepackage{iclr2026_conference,times}

\usepackage{hyperref}
\usepackage{url}

\input{preamble}
\usepackage{wrapfig}

\input{math_commands.tex}

\title{Post-hoc Self-explanation of CNNs}

\author{Ahcène Boubekki \& Line H. Clemmensen \\
Department of Mathematical Sciences \\
University of Copenhagen, Denmark \\
\texttt{\{ahcene.boubekki,lkhc\}@math.ku.dk}}

\iclrfinalcopy
\renewcommand{\headrulewidth}{0pt}

\begin{document}

\maketitle

\begin{abstract}
{
Although standard Convolutional Neural Networks (CNNs) can be mathematically reinterpreted as Self-Explainable Models (SEMs), their built-in prototypes do not on their own accurately represent the data. Replacing the final linear layer with a $k$-means-based classifier addresses this limitation without compromising performance. This work introduces a common formalization of $k$-means-based post-hoc explanations for the classifier, the encoder's final output (B4), and combinations of intermediate feature activations. The latter approach leverages the spatial consistency of convolutional receptive fields to generate concept-based explanation maps, which are supported by gradient-free feature attribution maps. Empirical evaluation with a ResNet34 shows that using shallower, less compressed feature activations, such as those from the last three blocks (B234), results in a trade-off between semantic fidelity and a slight reduction in predictive performance.}
\end{abstract}

\section{Introduction}

Convolutional Neural Networks (CNNs) are the basis for several foundation models, particularly for image classification~\cite{imagenet}, but are also utilized for other data types~\cite{ribeiro_automatic_2020}. Their architectures, inspired by the biological visual cortex, typically consist of a feature extractor or encoder that combines 2D convolutions, activation functions, dropout, and normalization layers, followed by an average pooling and finally a linear classifier or regressor. This structure is present in many modern models, including ResNet~\cite{he_deep_2016} and DenseNet~\cite{densenet}. In some cases, a dropout layer precedes the classifier, as in GoogleNet~\cite{googlenet} or EfficientNet~\cite{effnet}. Some variants use a multilayer perceptron as the classifier~\cite{krizhevsky_imagenet_2012,simonyan_very_2014}. The consistent element is the average pooling between the encoder and the classifier. This paper shows that this key operator renders these models structurally and mechanistically analogous to self-explainable models.\looseness=-1

Self-explainable models (SEMs) constitute a class of architectures designed to improve the interpretability and transparency of deep learning models by explicitly associating predictions with human-understandable concepts or prototypes. Here, we distinguish these two by defining concepts as frequent/relevant patterns in the training dataset that may possess varying degrees of semantic meaning, such as "a blue sky" or "a plane." Prototypes are vectors representing these concepts in the embedding space, and the similarity between input data and these prototypes provides concept-based explainability for predictions. The selection of concepts and prototypes is a critical consideration. Prototypes selected a priori~\cite{tcav} may introduce biases and fail to capture the full diversity of the dataset~\cite{celis2016fair}. Jointly learning prototypes during training comes at a cost of a complicated alternating training scheme and may negatively affect the performance~\cite{senn,protopnet,pantypes}. Additionally, because these architectures differ from the traditional classifiers they are intended to explain or replace, the interpretability insights obtained may not transfer seamlessly.\looseness=-1

This work advocates for computing prototypes after training. Indeed, the classic feedforward architecture remains highly versatile and efficient. A common formalization framework for $k$-means-based post-hoc explanations is introduced, covering three explanation locations: the classifier weights, the encoder's final output, and its intermediate feature activations. It encompasses the cases where  the CNN is reinterpreted as a SEM, the linear classifier is substituted with a $k$-means-based one as described in \cite{kmex}, and the multi-depth explanation of the encoder building \cite{boubekki_explaining_2025}. The latter is here further extended to produce gradient-free and concept-aligned feature attribution maps.

\section{Problem Definition and Notations}

Notational conventions follow \cite{goodfellow_deep_2016}: capital letters denote dimensions, dot products are written $\cdot$, and biases are omitted from classifiers without loss of generality. The classifier is represented by a matrix $\mC$ with columns $\vc_j$.\looseness=-1

A standard CNN-based classifier architecture comprises a CNN encoder $\enc: \R^Q \rightarrow \R^{R \times D}$, an average pooling operation $\avgpool: \R^{R \times D} \rightarrow \R^D$, and a linear classifier $\clf: \R^D \rightarrow \R^C$. The encoder output is typically a tensor of dimension $(W,H,D)$. However, for legibility, the first two dimensions are here flattened. The operations are formally defined as follows:\looseness=-1
\begin{align}\label{eq:cnn1}
\vx \xmapsto{\enc} \vh \xmapsto{\avgpool} \vz  \xmapsto{\clf} \vy,
\end{align}
where $\vx \in \R^Q$, $\vh \in \R^{R \times D}$, $\vz \in \R^D$ and $\vy \in \R^C$, and:
\begin{align}\label{eq:cnn2}
    \avgpool(\vh) = \dfrac{1}{R} \sum_{r=1}^R \vh_r = \vz \quad \text{and} \quad \clf(\vz) = \vz \cdot \mC = ( \vz \cdot \vc_j )_{j=1\ldots C}.
\end{align}

\paragraph{SEM} 
Self-explainable models differ from classic CNNs by their output layer. In this case, a classifier which compares a feature vector, also noted $\vh$, to a set of $K>0$ prototypes $\mP = \{ \vp_k \}_K$ using a similarity measure $\simm$. The similarity scores are then mapped using $\proj$ into a prediction score $\vy \in \R^C$. \looseness=-1

\begin{align}\label{eq:sem1}
\begin{split}
\vx \xmapsto{\enc } \vh \xmapsto{\simm}  \Big( \simm( \vh_r, \vp_k ) \Big)_{R,K} \xmapsto{\proj} \vy.\\
\end{split}
\end{align}

The dimensions of $\vh$ are intentionally omitted as the location of the prototypical classifier depends on which part of the CNN we want to explain.  The similarity measure varies between models and is usually implemented as either a dot product~\cite{flint} or a distance metric~\cite{protopnet}. In order to remain \emph{transparent}, $\proj$ should remain as simple as possible. Typically, it is either a matrix multiplication~\cite{protopnet} or a pooling operation.\looseness=-1

\paragraph{Explaining with ProtoPNet}
We use ProtoPNet introduced in \cite{protopnet} as the SEM baseline. It employs an alternating update strategy: the encoder is updated via gradient descent, and the prototype layer is updated via a multi-stage procedure. The similarity measure is $\ell_2$-based, and $\proj$ is a linear layer.

\section{Post-hoc Self-explanation of CNNs}

This section presents a common formalization for three $k$-means-based post-hoc self-explanation of frozen CNNs.

\subsection{Self-explaining the Classifier}
A CNN can be reinterpreted as a SEM. Indeed, since the classifier is a linear layer, its column vectors can be viewed as prototypes, and the matrix operation as the similarity measure.\looseness=-1 

\begin{theorem} Convolutional neural network classifiers are self-explainable models with $C$ prototypes corresponding to the vector columns of the classifier. 
\end{theorem}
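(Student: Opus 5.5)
The argument is a direct rewriting of the CNN pipeline of Eqs.~\eqref{eq:cnn1}--\eqref{eq:cnn2} so that it matches the SEM template of Eq.~\eqref{eq:sem1}. We choose the SEM ingredients as follows: the same encoder $\enc$, the prototype set $\mP=\{\vc_j\}_{j=1}^C$ (so $K=C$), the dot product $\simm(\vu,\vv)=\vu\cdot\vv$ as similarity, and average pooling over the spatial index $r$ as the projection $\proj$. We then check that this SEM computes exactly the same output $\vy$ as the CNN for every input $\vx$.

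The key step is to commute the average pooling with the linear classifier. For each class $j$,
\begin{align*}
y_j = \vz\cdot\vc_j = \Big(\frac{1}{R}\sum_{r=1}^R \vh_r\Big)\cdot\vc_j = \frac{1}{R}\sum_{r=1}^R \vh_r\cdot\vc_j = \frac{1}{R}\sum_{r=1}^R \simm(\vh_r,\vc_j).
\end{align*}
This uses only the bilinearity of the dot product, and biases are excluded by convention. Hence $\vy=\proj\big((\simm(\vh_r,\vp_k))_{R,K}\big)$, where $\proj$ averages the $R\times C$ similarity matrix over its first index. This is a pooling operation, which is one of the admissible transparent choices for $\proj$ listed after Eq.~\eqref{eq:sem1}. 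The CNN is therefore an instance of Eq.~\eqref{eq:sem1} with $C$ prototypes given by the columns of $\mC$.

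The main obstacle is definitional rather than computational. ``Self-explainable model'' is only specified structurally in the paper, so the proof amounts to exhibiting the decomposition and arguing that each component satisfies the stated requirements: the similarity is a dot product, as in \cite{flint}, and $\proj$ is a simple pooling. If an affine bias is reinstated, it can be absorbed by appending a constant coordinate to $\vh_r$ and to each $\vc_j$. Alternatively, it can be treated as an additive offset in $\proj$, which keeps $\proj$ transparent. We will also remark that the prototypes $\vc_j$ need not lie near the data distribution. The theorem asserts only structural equivalence, which motivates the $k$-means replacements discussed next.
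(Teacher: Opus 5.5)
Your proposal is correct and matches the paper's proof: you commute the average pooling with the dot product via linearity, then take the dot product as the similarity, average pooling as the projection, and the classifier columns $\vc_j$ as the $C$ prototypes. Your extra remarks on absorbing a bias and on the claim being purely structural are consistent with the paper's conventions and with its follow-up discussion of poorly aligned prototypes.
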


\begin{proof} 
Consider a CNN classifier as defined in Equations~\ref{eq:cnn1} and \ref{eq:cnn2}.
The commutativity of the average pool and dot-product operations means that the final prediction is also the average prediction of each $\vh_r$:
{\begin{align}\label{eq:proof2}
    \clf \circ \avgpool (\vh) = \bigg( \Big(\dfrac{1}{R} \sum_{r=1}^R \vh_r \Big) \cdot \vc_j \bigg)_{j=1\ldots C} = \bigg( \dfrac{1}{R} \sum_{r=1}^R \big( \vh_r \cdot \vc_j \big) \bigg)_{j=1\ldots C}
\end{align}}
By setting $\proj = \avgpool$, defining $\simm$ as the dot-product, and treating $\vc_j$ as prototypes, the following is obtained:
\begin{align}\label{eq:proof2}
    \clf \circ \avgpool \circ \enc (\vx) = \dfrac{1}{R} \sum_R ( \vh_r \cdot \vc_j ) = \proj \circ \simm \circ \enc (\vx)
\end{align}
Therefore, the operations of the CNN can be reinterpreted as those of an SEM, as defined by Equation~\ref{eq:sem1}.
\end{proof}

In practice, the $\vc_j$ vectors are not satisfactory prototypes, particularly with respect to diversity. Their number is inherently limited by the number of classes. Preliminary experiments indicate that simply increasing the number of classes and merging them using max or average pooling, without additional regularization, results in a single direction dominating each class. 
Although the cross-entropy loss encourages prediction separability, leading to class embeddings covering distinct half-spaces, it does not guarantee that points are centered or distributed along the line defined by the prototype or column vector, meaning that prototypes may be far from the data. This contradicts the implicit requirement of representativeness or diversity of the prototypes.\looseness=-1

\subsection{Explaining the Classifier}
A workaround to the lack of representativeness is to replace the classifier with a $k$-means based one as introduced in \cite{kmex}. As the centroids of class-wise clusterings, the prototypes are thus more likely to be closer to the data. The procedure is as follows:\looseness=-1

1. For each of the $C$ classes, $K/C$ prototypes are learned using $k$-means on $\vz$ of the training data.\\ \looseness=-1
2. The similarity measure is the exponential of minus the $\ell_2$ distance: $\simm(\vh, \vp_k) \!=\! \exp({\text{-}||\vh, \vp_k||^2})$.\\ \looseness=-1
3. The $\proj$ returns a one-hot vector centered on the class of the prototype with largest similarity score.\looseness=-1

The original KMEx utilizes the $\ell_2$ distance as a similarity measure and employs a nearest neighbor classifier. The similarity described in the second step is equivalent and allows to derive the class predictions using an $\argmax$ . Although the method can accommodate varying numbers of prototypes per class, we fix them to $K/C$ prototypes per class.\looseness=-1

\subsection{Explaining the Encoder}
To explain the encoder rather than the classifier, intermediate outputs, also referred to as feature activations, are compared to prototypes instead of the embedding vector after average pooling, like in the previous section. This approach aims to access information that may be filtered out before reaching the classifier and provide explanations at the patch or segment level.
ProtoPNet is an example of such an approach, as its so-called \emph{part} prototypes are compared to the pixels of the encoder's output. Other models~\cite{flint,zhu2025interpretable} utilize shallower feature activations to compute part prototypes. In these cases, the prototypes are composite representations that integrate information from multiple depths. The post-hoc framework presented here follows the rationale of KMEx~\cite{kmex} by using $k$-means to learn prototypes on a frozen backbone. Formally, it extends the work in \cite{boubekki_explaining_2025} to compute predictions and feature attributions.\looseness=-1

\paragraph{Extracting Feature Activations}
Let us first decompose the encoder into $B$ blocks of layers:
\begin{align}
    \vx \xmapsto{f_1} \vx^{(1)} \cdots \xmapsto{f_b} \vx^{(b)} \cdots \xmapsto{f_B} \vx^{(B)}=\vh, \quad \text{where} \quad \vx^{(b)} \in {\R}^{ R_b \times D_b}.
\end{align}
Since the outputs of each block have different resolutions $R_b$ and numbers of channels or dimensions $D_b$, preprocessing steps are required to compute a composite matrix $\widecheck{\vh}$. This matrix is then compared to the prototypes to generate a prediction vector $\vy$. The $K/C$ class prototypes are learned using $k$-means on the row vectors $\widecheck{\vh}_r$ of the class data. The operations are as follows.\looseness=-1

1. The intermediate outputs are first linearly interpolated to a shared resolution $R'$.
 \begin{align}
 \operatorname{Upsample} \big( \vx^{(b)} \big) = \vu^{(b)} \in {\R}^{R' \times D_b}.
\end{align}
2. The $\vu^{(b)}$ are then normalized and scaled before being concatenated into $\widecheck{\vh}$
\begin{align}
\operatorname{Concatenate} \left( \dfrac{\vu^{(b)}}{D_b ||\vu^{(b)}||}  \right) = \widecheck{\vh} \in {\R}^{R' \times (\sum_b D_b)}
\end{align}
3. The assignments of the rows $\widecheck{\vh}_r$ to the closest prototypes are stored in a \emph{binary matrix}:
\begin{align}
     \simm\big( ( \vx^{(b)} )_B, \mP \big)  = \simm\big( \widecheck{\vh}, \mP \big)  = \argmin_k( || \widecheck{\vh}_r - \vp_k|| ) = \widecheck{\vs} \in {\R}^{R' \times K}
\end{align}
4. The prediction vector is computed as the average count of the class-wise clusters in $\widecheck{\vs}$, and the predicted class corresponds to the most frequently occurring cluster.
\begin{align}
    \proj( \widecheck{\vs} ) = \avgpool \left( \sum_{r} \widecheck{\vs}_r \:; K/C \right) = \vy \in {\R}^C
\end{align}
The average pooling operation, which uses $K/C$ non-overlapping windows, assumes a constant and ordered number of clusters per class.
The assignment binary matrix $\widecheck{\vs}$ can be interpreted as a low-resolution segmentation of the input, referred to as an ``\emph{explanation map}''. In practice, outputs from all blocks after a specified depth are combined. Accordingly, the notations $\widecheck{\vh}^{(b:)}$ and $\widecheck{\vs}^{(b:)}$ indicate that all outputs from $f_b$ to $f_B$ are utilized.\looseness=-1

\paragraph{Feature Importance}

Numerous feature attribution methods have been proposed~\cite{IG,GC,RISE}, resulting in a wide variety of outputs. The present work introduces a simplified approach that leverages the self-explainable reinterpretation of a CNN and the commutativity of average pooling with respect to the dot product. To keep track of the input format, this section explicitly indicates  the width $W_b$ and height $H_b$ of the outputs, rather than the aggregated $R_b$ dimension.\looseness=-1

Analogous to the class activation map (CAM)~\cite{cam}, the feature attribution score of pixel $\vh_{wh}$ is defined as a measure of its alignment with $\vc_j$.\looseness=-1
\begin{align}
    \att ( \vh_{wh}, \vc_j ) = \dfrac{ \vh_{wh} \cdot \vc_j }{|| \vc_j||^2}
\end{align}
The distinction lies in the normalization by the squared norm, which ensures that $\att( \vc_j^T, \vc_j )=1$ and allows values to be compared across classes.\looseness=-1

The resulting attribution map matches the low resolution of the encoder's output. Rather than relying on assumptions about the information conveyed by gradients, as in Grad-CAM variants~\cite{GC}, the upstream feature attribution map is approximated by exploiting the spatial consistency of the convolutional receptive fields.
Namely, the attribution map at depth $b$ is computed from the one at depth $b+1$ as follows:
\begin{enumerate}
    \item Compute the explanation map at depth $b$, $\widecheck{\vs}^{(b:)} \in {\R}^{W_b \times H_b \times K}$. 
    \item Upsample $\att^{(b+1:)}  \in {\R}^{W_{b+1} \times H_{b+1}} $ to $\att^{(b+1:,\text{up})} \in {\R}^{W_b \times H_b}$.
    \item Compute $\att^{(b:)}$ as the segment-wise average of $\att^{(b+1:,\text{up})}$ based on $\widecheck{\vs}^{(b:)}$.
\end{enumerate}
The segment-wise averaging results in discrete attribution, where pixels within the same segment share an identical attribution score.\looseness=-1

\begin{figure}[!h]
	\centering
	\hfill
    \begin{tabular}{c}
        \small{B4} \\
	    \includegraphics[width=.4\linewidth]{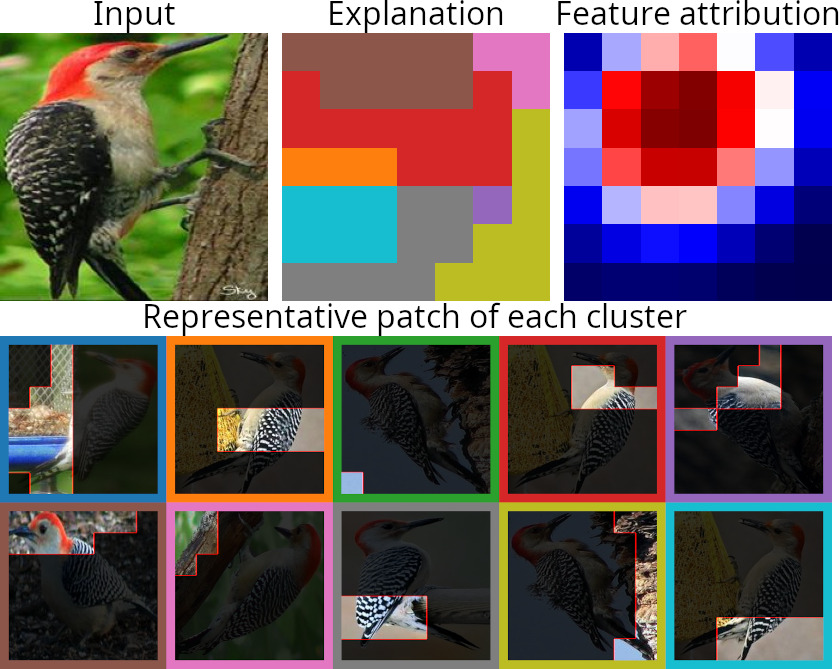} 
    \end{tabular}
	\hfill
    \begin{tabular}{c}
        \small{B234} \\
	    \includegraphics[width=.4\linewidth]{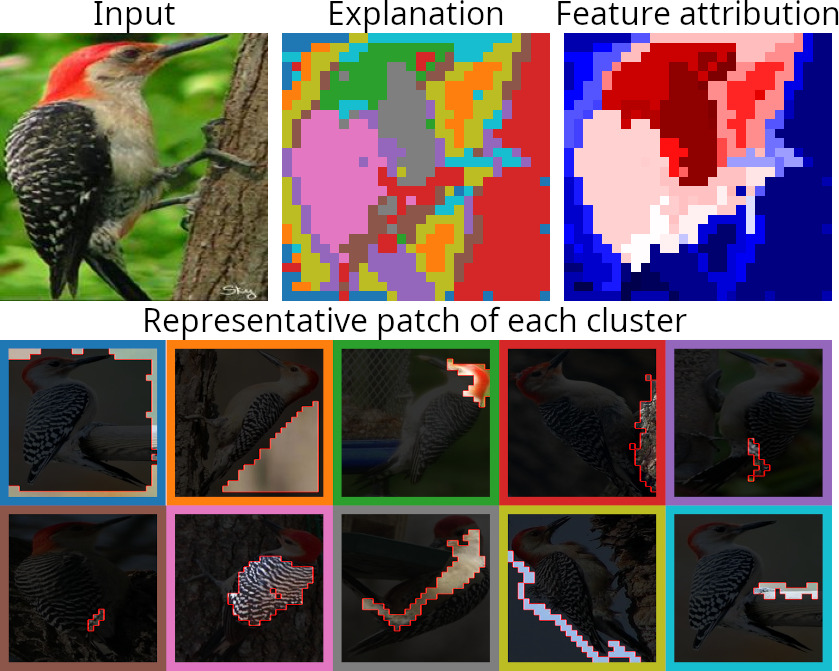} 
    \end{tabular}
	\hfill
	\caption{Interpretation process for B4 (left) and B234 (right) on a CUB-200 red-bellied woodpecker. Red and blue indicate higher and lower feature importance. Representative patches are the closest training examples to each prototype; border colors match the explanation map segments.\looseness=-1}
	\label{fig:interp}
\end{figure}

\paragraph{Interpretation Process}
Figure~\ref{fig:interp} illustrates the interpretation process of the $k$-means based explanation of the encoder. Each input receives two forms of explanation: a segmentation map and a feature importance map. The segmentation map delineates regions of the input that yield similar feature activations, resulting in clustered segments. Cluster interpretation is supported by visualization of the nearest segment to associated prototype, using a consistent color scheme. The feature importance map highlights the relevance of each concept with respect to the classifier of the backbone model. Incorporating shallower feature activations enhances the resolution and detail of the map. \looseness=-1

\section{Experiments}

\paragraph{Experimental Setting} 
All experiments use a ResNet34 backbone pretrained on ImageNet, trained following \cite{kmex}, and evaluated on MNIST~(\cite{mnist}), STL10~(\cite{stl10}), and CUB-200~(\cite{WahCUB_200_2011}), with five prototypes per class for the first two and ten for CUB-200, if not specified otherwise.
Our method is referenced according to the depth of the feature activations employed. The ResNet34 backbone consists of a series of preprocessing layers and four residual blocks. The model utilizing the same information as ProtoPNet, specifically at the encoder's output (the fourth block), is designated as B4. The model B234, which incorporates outputs from the last three blocks, is also evaluated. We compare these to KMEx and ProtoPNet.\looseness=-1

\paragraph{Alignment of the Prototypes}
The column vectors $\mathbf{c}_j$ of the classifier define the half-space in which the class embedding predominantly resides, rather than the direction along which the data spreads. Table~\ref{tab:cos} validates this interpretation using the average cosine similarity between each method's prototypes and their class embedding (\emph{class}) and all other data points (\emph{out}).

The uniformly high cosines of ProtoPNet reflect the distinct embedding geometry produced by an integrated training. The \emph{class} cosines of the CNN's classifier weights are approximately $0.5$, versus nearly $1$ for KMEx, indicating poor alignment of the $\mathbf{c}_j$ with the data. The \emph{out} cosines suggest that the $\mathbf{c}_j$ define nearly orthogonal half-spaces, though the class embeddings themselves are not fully orthogonal. Figure~\ref{fig:umap} illustrates this misalignment via a UMAP projection~\cite{umap} of the CNN's embeddings for the twenty sparrow classes of CUB-200. The classifier prototypes $\mathbf{c}_j$, both raw (crosses) and rescaled to class norm (triangles), are dispersed and rarely near their respective class; by contrast, the KMEx prototypes (squares) lie within their respective clusters.\looseness=-1

\begin{figure}[!h]
	\centering
	\hfill
	\includegraphics[width=.45\linewidth]{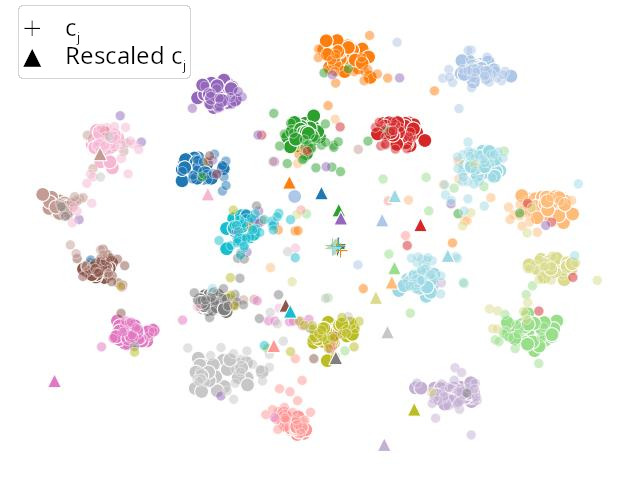}
	\hfill
	\includegraphics[width=.45\linewidth]{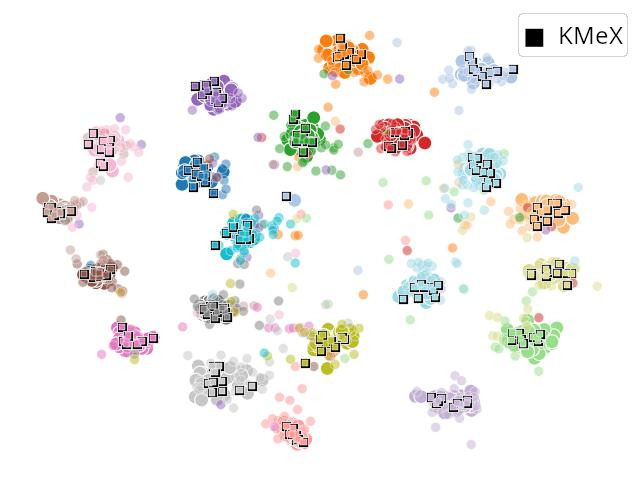}
	\hfill
	{\caption{UMAP projection of training and test (lower opacity) embeddings from the twenty sparrow classes of CUB-200. Left: classifier prototypes $\mathbf{c}_j$ as crosses and rescaled to class norm as triangles. Right: KMEx prototypes.}
	\label{fig:umap}}
\end{figure}

\begin{table}
\centering
\begin{minipage}{0.5\linewidth}\centering
  \caption{Average cosine of classifier prototypes within \emph{class} and \emph{out} of class embedding.}
  \label{tab:cos}
{   \small \centering
          \begin{tabular*}{\linewidth}{@{\extracolsep{\fill}}lrlrlrl}
                  \toprule
                       & \multicolumn{2}{c}{MNIST} & \multicolumn{2}{c}{STL10} & \multicolumn{2}{c}{CUB200} \\ 
Model                  & class & out & class & out & class & out \\ \midrule
CNN                    & $0.48$ & $-0.05$ & $0.46$ & $-0.04$ & $0.51$ & $0.00$ \\ 
KMEx {\tiny($K\!/\!C\text{=}1$)}             & $0.88$ & $0.32$  & $0.79$ & $0.42$  & $0.93$ & $0.47$ \\ 
KMEx {\tiny($K\!/\!C\text{=}5$)}   & $0.83$ & $0.30$  & $0.84$ & $0.34$  & $0.87$ & $0.45$ \\ 
ProtoPNet              & $1.00$ & $1.00$  & $0.95$ & $0.92$  & $0.98$ & $0.97$ \\
                  \bottomrule \vspace*{1.7em}
          \end{tabular*}}
\end{minipage}
\hfill
\begin{minipage}{0.48\linewidth}\centering
  \caption{Average train and test accuracies over five runs. Models were pretrained on ImageNet.}
  \label{tab:acc}
  {   \small \centering
          \begin{tabular*}{\linewidth}{@{\extracolsep{\fill}}lrlrlrl}
                  \toprule
          & \multicolumn{2}{c}{MNIST} & \multicolumn{2}{c}{STL10} & \multicolumn{2}{c}{CUB200} \\
Model            & train & test & train & test & train & test \\ \midrule
CNN              & $100.0$ & $99.4$ & $95.6$ & $86.1$ & $100.0$ & $79.1$ \\ 
KMEx {\tiny($K\!/\!C\text{=}5$)}     & $100.0$ & $99.4$ & $94.9$ & $85.6$ & $100.0$ & $78.6$ \\ 
ProtoPNet        & $100.0$ & $99.4$ & $74.2$ & $72.7$ & $99.4$ & $66.4$ \\
\multicolumn{4}{l}{\hspace{-2pt}{ProtoPNet in \cite{protopnet}} }   &  &  & $79.2$ \\ \midrule

Ours B4     & $100.0$ & $99.4$ & $94.0$ & $85.5$ & $100.0$ & $75.2$ \\ 
Ours B234   & $98.9$ & $98.5$ & $83.5$ & $77.9$ & $85.5$ & $52.8$ \\ 
                  \bottomrule
          \end{tabular*}}
\end{minipage}
\end{table}

\paragraph{Concept-based Accuracy}
Regarding train and test accuracies (Table~\ref{tab:acc}), replacing the linear classifier with KMEx does not affect the performance. This outcome is expected, since the prototypes of KMEx are learned class-wise and based on the same information as the $\vc_j$. 
Although B4 derives the class predictions from the concept proportions, it matches the performance of the backbone network (Table~\ref{tab:acc}). However, including information at shallower depths, as in B234, renders the classification signal less distinct, leading to reduced performance.\looseness=-1

\section{Discussion and Related Work}
Concept-based explainability methods for deep networks have been extensively reviewed in \cite{lee2025concept}. The proposed framework offers a common formalization of $k$-means-based post-hoc explanations, explicitly avoiding gradient- or perturbation-based techniques. We position our work with respect to the most closely related methods for explaining the encoder. TCAV~(\cite{tcav}) probes feature activations at a single depth using user-defined concept vectors and assigns feature importance using directional derivatives of the classifier's output with respect to the concept vectors. Its extension, ACE~(\cite{ghorbani2019towards}), automates concept discovery by segmenting inputs with SLIC~(\cite{SLIC}) and processing the resulting crops through the encoder. Since SLIC segmentation operates independently of the encoder's learned representations, it explains image regions rather than internal concepts. Additionally, projecting crops through an encoder trained on complex images is likely to result in out-of-distribution behavior. Substituting TCAV with SHAP produces CONE-SHAP~(\cite{li2021instance})), which retains the same limitations. Network Dissection~(\cite{bau2017network}) and Net2Vec~(\cite{fong2018net2vec}) probe individual neurons at a single depth to identify human-interpretable concepts. For the final layer of a ResNet34 (B4), this involves comparing 512 neurons, which, as reported by the authors, are often redundant. CRAFT~(\cite{fel2023craft}) and ICACE~(\cite{zhang2021invertible}) apply matrix factorization to feature activations to extract concepts, and leverage implicit differentiation and non-negative constraints, respectively, to produce concept attribution maps. However, restricting the back-propagated signal to a single class leaves portions of the input unexplained.\looseness=-1

\section{Conclusion}

This paper introduces a common formalization of $k$-means-based post-hoc explanations covering three locations: the classifier weights, the encoder's final output, and its intermediate feature activations. The commutativity of average pooling and the linear classifier shows that CNNs are mechanistically analogous to prototype-based SEMs. However, the classifier's column vectors, when interpreted as prototypes, are poorly aligned with the data, undermining their representativeness. Replacing them with $k$-means centroids addresses this limitation without loss of performance. Extending this to intermediate feature activations produces detailed, dense explanation maps supported by semantically relevant prototypes. Attribution maps are computed via a gradient-free variant of CAM that leverages the spatial consistency of convolutional receptive fields. Together, these results suggest that the perceived black-box nature of CNNs is not an inherent property, but one that can be mitigated through rigorous reinterpretation of their existing operations.

The primary limitation is the scalability of $k$-means to large datasets, though preliminary results suggest that performance remains stable on subsets of the training data. The sustained classification performance of B4 and B234 using cluster distributions calls for further investigation.\looseness=-1

\bibliographystyle{iclr2026_conference}
\bibliography{references}

\end{document}

%% file: math_commands.tex
\usepackage{amsmath,amsfonts,bm}

\def\eqref#1{equation~\ref{#1}}

\def\1{\bm{1}}

\def\vc{{\bm{c}}}

\def\vh{{\bm{h}}}

\def\vp{{\bm{p}}}

\def\vs{{\bm{s}}}

\def\vu{{\bm{u}}}

\def\vx{{\bm{x}}}
\def\vy{{\bm{y}}}
\def\vz{{\bm{z}}}

\def\mC{{\bm{C}}}

\def\mP{{\bm{P}}}

\DeclareMathAlphabet{\mathsfit}{\encodingdefault}{\sfdefault}{m}{sl}
\SetMathAlphabet{\mathsfit}{bold}{\encodingdefault}{\sfdefault}{bx}{n}

\newcommand{\R}{\mathbb{R}}

\DeclareMathOperator*{\argmax}{arg\,max}
\DeclareMathOperator*{\argmin}{arg\,min}